\newcommand{\mb}[1]{\boldsymbol{#1}}
\newcommand\trsp{{\!\scriptscriptstyle\top}}
\newcommand{\etal}{\textit{et al. }}
\newcommand{\bluetext}[1]{#1}
\DeclareMathOperator*{\argmin}{arg\,min}
\newlength{\Oldarrayrulewidth}
\newcommand{\Cline}[2]{%
	\noalign{\global\setlength{\Oldarrayrulewidth}{\arrayrulewidth}}%
	\noalign{\global\setlength{\arrayrulewidth}{#1}}\cline{#2}%
	\noalign{\global\setlength{\arrayrulewidth}{\Oldarrayrulewidth}}}
\newtheorem{theorem}{Theorem}
\newtheorem{lemma}{Lemma}
\title{\bf Uncertainty-Aware Imitation Learning\\using Kernelized Movement Primitives}
\author{Jo\~ao Silv\'erio$^{1,2}$, Yanlong Huang$^2$, Fares J. Abu-Dakka$^{2,3}$, Leonel Rozo$^4$ and Darwin G. Caldwell$^2$
\thanks{$^1$Idiap Research Institute, CH-1920 Martigny, Switzerland (e-mail: joao.silverio@idiap.ch).}
\thanks{$^2$Department of Advanced Robotics, Istituto Italiano di Tecnologia, 16163 Genova, Italy (e-mail: name.surname@iit.it)}
\thanks{$^3$Intelligent Robotics Group, EEA, Aalto University, FI-00076 Aalto, Finland (e-mail: fares.abu-dakka@aalto.fi).}
\thanks{$^4$Bosch Center for Artificial Intelligence, 71272 Renningen, Germany (e-mail:leonel.rozo@de.bosch.com).}
\thanks{Jo\~ao Silv\'erio is partially supported by the CoLLaboratE project (https://collaborate-project.eu/), funded by the EU within H2020-DT-FOF-02-2018 under grant agreement 820767. Fares J. Abu-Dakka is partially supported by CHIST-ERA project IPALM (Academy of Finland decision 326304)}}
\begin{document}

\maketitle
\thispagestyle{empty}
\pagestyle{empty}

\begin{abstract}

During the past few years, probabilistic approaches to imitation learning have earned a relevant place in the robotics literature. One of their most prominent features is that, in addition to extracting a mean trajectory from task demonstrations, they provide a variance estimation. The intuitive meaning of this variance, however, changes across different techniques, indicating either \textit{variability} or \textit{uncertainty}. In this paper we leverage kernelized movement primitives (KMP) to provide a new perspective on imitation learning by predicting variability, correlations and uncertainty using a single model. This rich set of information is used in combination with the \textit{fusion of optimal controllers} to learn robot actions from data, with two main advantages: i) robots become safe when uncertain about their actions and ii) they are able to leverage partial demonstrations, given as elementary sub-tasks, to optimally perform a higher level, more complex task. We showcase our approach in a painting task, where a human user and a KUKA robot collaborate to paint a wooden board. The task is divided into two sub-tasks and we show that the robot becomes compliant (hence safe) outside the training regions and executes the two sub-tasks with optimal gains otherwise.

\end{abstract}

\section{Introduction}
Probabilistic approaches to imitation learning  \cite{Argall09} have witnessed a rise in popularity during the past few years. They are often seen as complementing deterministic techniques, such as dynamic movement primitives \cite{Ijspeert13}, with more complete descriptions of demonstration data, in particular in the form of covariance matrices that encode both the variability and correlations in the data. Widely used approaches at this level include Gaussian mixture models (GMM), popularized by the works of Calinon (e.g. \cite{Calinon14ICRA}) and more recently, probabilistic movement primitives \cite{Paraschos13} and kernelized movement primitives (KMP) \cite{Huang2019d}.

In recent work \cite{Silverio2018b, Silverio2018a}, we discussed a fundamental difference between the type of variance encapsulated by the predictions of classical probabilistic techniques, particularly Gaussian mixture regression (GMR) and Gaussian process regression (GPR) \cite{Rasmussen06}. We showed that the variance predicted by these two techniques has distinct, complementary interpretations. In particular that GMR predictions measure the \textit{variability} in the training data, while those of GPR quantify the degree of \textit{uncertainty}, increasing as one queries a model farther away from the region where it was trained. These properties are illustrated in Fig. \ref{fig:GMRvsGPR}. This finding led us to inquire: is there a probabilistic technique that can simultaneously predict \emph{both variability and uncertainty}? Are these two notions compatible and unifiable into a single imitation learning framework where they both provide clear advantages from a learning perspective? In this paper we try to answer these questions.

The two types of variance have been individually leveraged by different lines of work. For instance, variability and data correlations (encapsulated in full covariance matrices) have been used to modulate control gains in several works \cite{Calinon14ICRA,Medina2012,Lee2015,Huang2018}.
Uncertainty, in the sense of absence of data/information, is also a concept with tradition in robotics. Problems in robot localization \cite{Fox1999}, control \cite{Medina15} and, more recently, Bayesian optimization \cite{Calandra2016}, leverage uncertainty information to direct the robot to optimal performance. In \cite{Silverio2018a} we took advantage of uncertainty to regulate robot stiffness, in order to make it compliant (and safer) when uncertain about its actions. However, to the best of our knowledge, variability and uncertainty have never been exploited simultaneously in imitation learning.

\begin{figure}
	\includegraphics[width=\columnwidth]{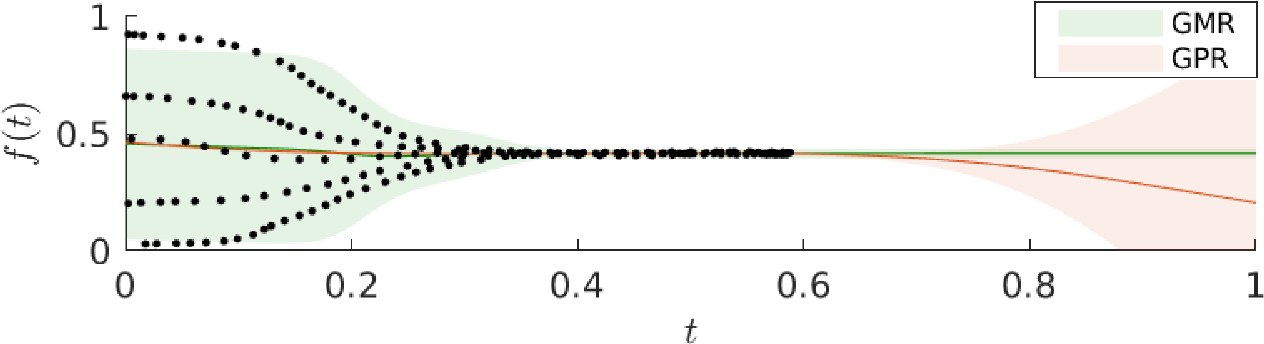}
	\caption{Gaussian mixture regression (GMR) and Gaussian process regression (GPR) provide complementary notions of variance (represented as green and red shaded areas) as variability and absence of training datapoints (depicted as black dots). With a unified technique, robots can learn controllers that are modulated by both types of information.}
	\label{fig:GMRvsGPR}
\end{figure}

In this paper we introduce an approach that predicts \textit{variability}, \textit{correlations} and \textit{uncertainty} from KMP and uses this information to design optimal controllers from demonstrations. These drive the robot with high precision when the variability in the data is low (while respecting the observed correlations across degrees of freedom) and render the robot compliant (and safer to interact with) when the uncertainty is high. The uncertainty is further leveraged by the robot to know when different controllers, responsible for the execution of separate, elementary sub-tasks, should be activated. In particular we:

\begin{enumerate}
	\item demonstrate that KMP predicts full covariance matrices and uncertainty (Sections \ref{sec:KMP} and \ref{sec:uncertKMP})
	\item exploit a linear quadratic regulator (LQR) formulation that yields control gains which are a function of both covariance \textit{and} uncertainty (Section \ref{sec:optimalCtr})
	\item dovetail 1), 2) with the concept of \textit{fusion of controllers} \cite{Silverio2018b} which allows for demonstrating one complex task as separate sub-tasks, whose activation depends on individual uncertainty levels (Section \ref{sec:fusion})
\end{enumerate}

Experimentally, we expand on a previously published robot-assisted painting scenario and validate the approach using a KUKA LWR where different types of controllers are used for individual sub-tasks (Section \ref{sec:experiments}). We provide a discussion of the approach and the obtained results in Section \ref{sec:discussion} and concluding remarks and possible extensions in Section \ref{sec:conclusions}.

\section{Related Work}
\label{sec:relatedWork}

Most probabilistic regression techniques provide variance predictions in some form. GMR, relying on a previously trained GMM, computes full covariance matrices encoding the correlation between output variables. However, it does not measure uncertainty, defaulting to the covariance of the closest Gaussian component when a query point is far from the model. GPR, despite estimating uncertainty, assumes constant noise  therefore not taking the variability of the outputs  into account. Heteroscedastic Gaussian Processes (HGP) \cite{Goldberg1998,Kersting2007} introduce an input-dependent noise model into the regression problem. Nonetheless, tasks with multiple outputs require the training of separate HGP models, thus output correlations are not straightforward to learn in the standard formulation. In addition, the noise is treated as a latent function, hence each HGP depends upon the definition of two Gaussian processes (GP) per output, scaling poorly with the number of outputs. 
In \cite{Choi2018}, Choi \etal propose to use mixture density networks (MDN) in an imitation learning context to predict both variability and uncertainty. The main drawback of the approach, similarly to HGP, is that outputs are assumed to be uncorrelated. Moreover, in \cite{Choi2018} only the uncertainty is used in the proposed imitation learning framework, without considering variability. As opposed to the aforementioned works, we here show that KMP predicts both full covariance matrices and a diagonal uncertainty matrix, parameterized by its hyperparameters, allowing the access to all the desired information. Table \ref{tab:RelatedWork} details the differences between variance predictions of different algorithms, highlighting that KMP estimates all desired features in our approach. 

In terms of estimating optimal controllers from demonstrations, previous works have either exploited full covariance matrices encoding variability and correlations \cite{Calinon14ICRA,Medina2012,Lee2015} or diagonal uncertainty matrices \cite{Silverio2018a}. While the former are aimed at control efficiency, by having the robot apply higher control efforts where required (depending on variability), the latter target safety, with the robot becoming more compliant when uncertain about its actions. The LQR we propose in Section \ref{sec:optimalCtr} is identical to the one in \cite{Calinon14ICRA,Silverio2018a,Huang2018}. However, by benefiting from the KMP predictions, it unifies the best of the two approaches. Umlauft \etal \cite{Umlauft17} propose a related formulation where, using Wishart processes, they build full covariance matrices with uncertainty. However their solution requires a very high number of parameters, whose estimation relies heavily on optimization, and their control gains are set heuristically.

Finally, inspired by \cite{Calinon09AR}, in \cite{Silverio2018b} we proposed a \textit{fusion of controllers} to allow robots  to smoothly switch between sub-tasks based on the uncertainty of each sub-task's controller. Here we go one step further and consider \textit{optimal} controllers learned from demonstrations into the fusion, instead of manually defining the control gains. In previous work \cite{Huang2018}, we have studied the fusion of optimal controllers. However, in that case we focused on time-driven trajectories whereas here we consider multi-dimensional inputs and uncertainty.

The approach described in the next sections therefore aims at a seamless unification of concepts exploited in previous work, taking imitation learning one step ahead into the learning of optimal controllers for potentially complex tasks.

\begin{table}
	\centering
	\begin{tabular}{cccc}
		& \multicolumn{3}{c}{Types of prediction}\\
		\Cline{1pt}{2-4}
		\rule{0pt}{2.5ex} & Variability &  Uncertainty & Correlations \\
		\toprule[1.0pt]
		GMM/GMR \cite{Calinon14ICRA} & \checkmark & -- & \checkmark \\
		GPR \cite{Rasmussen06} & -- & \checkmark & --\\
		HGP \cite{Goldberg1998,Kersting2007} & \checkmark  & \checkmark & -- \\
		MDN \cite{Choi2018} & \checkmark & \checkmark & -- \\		
		Our approach & \checkmark & \checkmark & \checkmark\\		
		\bottomrule[1.0pt]
	\end{tabular}
	\caption{(Co)variance predictions of different techniques.}
	\label{tab:RelatedWork}
\end{table}

\section{Kernelized Movement Primitives}
\label{sec:KMP}

We consider datasets comprised of $ H $ demonstrations with length $ T $, $ \{\{\mb{\xi}^{t,h}_\mathcal{I},\mb{\xi}^{t,h}_\mathcal{O}\}^T_{t=1}\}^H_{h=1}$ where ${\mb{\xi}_{\mathcal{I}} \in \mathbb{R}^{D_\mathcal{I}}}$  and  $\mb{\xi}_{\mathcal{O}} \in \mathbb{R}^{D_\mathcal{O}}$ denote inputs and outputs ($ \mathcal{I}, \mathcal{O} $ are initials for `input' and `output'), respectively, and $ D_\mathcal{I}, D_\mathcal{O} $ are their dimensions. $ \mb{\xi}_\mathcal{I} $ can represent any variable of interest to drive the movement synthesis (e.g., time, object/human poses) and
$ \mb{\xi}_\mathcal{O} $ encodes the desired state of the robot (e.g., an end-effector position, a joint space configuration). 
KMP assumes access to an $ N$-dimensional probabilistic trajectory distribution $ \{\mb{\xi}_\mathcal{I}^n, \hat{\mb{\mu}}_n,\hat{\mb{\Sigma}}_n\}^N_{n=1} $ mapping a sequence of inputs to their corresponding means and covariances, which encompass the important features in the demonstration data. This probabilistic reference trajectory can be obtained in various ways, for example by computing means and covariances empirically at different points in a dataset or by using unsupervised clustering techniques. Here we follow the latter direction, in particular by using a GMM to cluster the data and GMR to obtain the trajectory distribution that initializes KMP (done once after data collection).

By concatenating the trajectory distribution into $ {\mb{\mu} = [\hat{\mb{\mu}}^\trsp_1 \> \ldots \> \hat{\mb{\mu}}^\trsp_N]^\trsp}$ and	${\mb{\Sigma} = \mathrm{blockdiag}(\hat{\mb{\Sigma}}_1,\ldots,\hat{\mb{\Sigma}}_N)}$,
%
%
KMP predicts a new Gaussian distribution at new test points $\mb{\xi}_\mathcal{I}^*$ according to \cite{Huang2019d}
\begin{align}
	\mb{\mu}^*_\mathcal{O} & = \mb{k}^*(\mb{K} + \lambda_1\mb{\Sigma})^{-1}\mb{\mu}, \label{eq:KMPmean}\\
	\mb{\Sigma}^*_\mathcal{O} & = \frac{N}{\lambda_2}\left(\mb{k}^{**}-\mb{k}^*(\mb{K} + \lambda_2\mb{\Sigma})^{-1})\mb{k}^{*\trsp}\right). \label{eq:KMPcov}
\end{align}
where 
\begin{equation}
	\mb{K} = 
	\left[\begin{matrix}
		\mb{k}(\mb{\xi}^1_\mathcal{I},\mb{\xi}^1_\mathcal{I}) & \cdots & \mb{k}(\mb{\xi}^1_\mathcal{I},\mb{\xi}^N_\mathcal{I})\\
		\vdots & \ddots & \vdots\\
		\mb{k}(\mb{\xi}^N_\mathcal{I},\mb{\xi}^1_\mathcal{I}) & \cdots & \mb{k}(\mb{\xi}^N_\mathcal{I},\mb{\xi}^N_\mathcal{I})\\
	\end{matrix}\right]
\end{equation}
is a matrix evaluating a chosen kernel function $ k(.,.) $ at the training inputs, $ \mb{k}^* = \left[ \mb{k}(\mb{\xi}^*_\mathcal{I},\mb{\xi}_\mathcal{I}^1) \> \ldots \> \mb{k}(\mb{\xi}^*_\mathcal{I},\mb{\xi}_\mathcal{I}^N) \right]$ and $ \mb{k}^{**} = \mb{k}(\mb{\xi}^*_\mathcal{I},\mb{\xi}_\mathcal{I}^*)$. Moreover, $ \mb{k}(\mb{\xi}^i_\mathcal{I},\mb{\xi}^j_\mathcal{I}) = k(\mb{\xi}^i_\mathcal{I},\mb{\xi}^j_\mathcal{I})\mb{I}_{D_\mathcal{O}} $. Hyperparameters $ \lambda_1,\lambda_2 $ are regularization terms chosen as to constrain the magnitude of the predicted mean and covariance, respectively. The kernel treatment implicit in \eqref{eq:KMPmean}-\eqref{eq:KMPcov} assumes the previous choice of a kernel function that depends on the characteristics of the training data. We here consider the squared-exponential kernel
\begin{equation}
	k(\mb{\xi}^i_\mathcal{I},\mb{\xi}^j_\mathcal{I}) = \sigma^2_f\>\mathrm{exp}\left(-\frac{1}{l}||\mb{\xi}^i_\mathcal{I}-\mb{\xi}^j_\mathcal{I}||^2\right),
	\label{eq:kernel}
\end{equation}
a common choice in the literature. We hence have that KMP with kernel \eqref{eq:kernel} requires the definition of four hyperparameters $ \{\lambda_1, \lambda_2, l, \sigma_f^2\} $. Note the similarity between predictions \eqref{eq:KMPmean}-\eqref{eq:KMPcov} and other  kernel-based techniques (e.g. GPR, HGP). The main difference is that in KMP the \textit{noise model} is learned through $ \mb{\Sigma} $ which describes both the variability and correlations present in the data throughout the trajectory. This makes KMP a richer representation when compared to GPR or HGP, which assume either constant noise ${ \hat{\mb{\Sigma}}_i=\sigma^2_\epsilon\mb{I}_{D_\mathcal{O}},\forall i=1,\ldots,N} $ (GPR) or input-dependent uncorrelated noise ${\hat{\mb{\Sigma}}_i = \sigma^2_\epsilon(\mb{\xi}^i_\mathcal{I})\mb{I}_{D_\mathcal{O}}}$ (HGP).

\section{Uncertainty-aware imitation learning with KMP}
\label{sec:uncertainApproach}

We now demonstrate that KMP provides an estimation of uncertainty through \eqref{eq:KMPcov}, by defaulting to a diagonal matrix completely specified by its hyperparameters in the absence of datapoints (Section \ref{sec:uncertKMP}). In addition we propose a control framework to convert the predictions into optimal robot actions (Section \ref{sec:optimalCtr}) and the fusion of optimal controllers (Section \ref{sec:fusion}). 

\subsection{Uncertainty predictions with KMP}
\label{sec:uncertKMP}

In the light of the kernel treatment \eqref{eq:KMPcov} and the exponential kernel \eqref{eq:kernel}, both covariance and uncertainty predictions emerge naturally in the KMP formulation. While the former occur within the training region, the latter arise when querying the model away from the original data.

\begin{lemma}
	The squared exponential kernel \eqref{eq:kernel} goes to zero as $ || \mb{\xi}^n_\mathcal{I} - \mb{\xi}^*_\mathcal{I}|| \rightarrow +\infty, \forall n=1,\ldots,N$.
	\label{lemma1}
\end{lemma}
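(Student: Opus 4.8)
The plan is to reduce the statement to an elementary limit of a one-variable function, since the kernel \eqref{eq:kernel} depends on its two arguments only through the Euclidean distance between them. First I would introduce the scalar $ r = ||\mb{\xi}^n_\mathcal{I} - \mb{\xi}^*_\mathcal{I}|| \geq 0 $ and rewrite the kernel value as $ k(\mb{\xi}^n_\mathcal{I},\mb{\xi}^*_\mathcal{I}) = \sigma^2_f\,\mathrm{exp}(-r^2/l) $. This makes explicit that the limit in the lemma, taken as $ r \rightarrow +\infty $, is a limit of a function of a single nonnegative real variable, and in particular that it does not depend on the index $ n $ (so the claimed ``$ \forall n $'' is automatic once the scalar limit is established).

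Next I would invoke the positivity of the length-scale hyperparameter, $ l > 0 $, which is the standing assumption for the squared-exponential kernel. Under this assumption the exponent $ -r^2/l \rightarrow -\infty $ as $ r \rightarrow +\infty $, so that $ \mathrm{exp}(-r^2/l) \rightarrow 0 $. Since $ \sigma^2_f $ is a fixed, finite constant, multiplying by it preserves the limit, giving $ \sigma^2_f\,\mathrm{exp}(-r^2/l) \rightarrow 0 $. This is the desired conclusion, and it follows directly from the continuity of the exponential and the standard fact that $ \mathrm{exp}(-x) \rightarrow 0 $ as $ x \rightarrow +\infty $.

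There is no genuine obstacle here: the result is a one-line consequence of the decay of the Gaussian exponential. The only point that must not be glossed over is the hypothesis $ l > 0 $ (and $ \sigma^2_f $ finite); if one wished to be fully explicit one could phrase the argument as an $ \varepsilon $--$ \delta $ statement, exhibiting for each $ \varepsilon > 0 $ a radius $ R = \sqrt{l\,\ln(\sigma^2_f/\varepsilon)} $ beyond which $ k < \varepsilon $, but this level of detail is not necessary for the claim. The value of the lemma lies not in its difficulty but in what it enables: because $ \mb{k}^* $ collects exactly these kernel evaluations between the test point and the training inputs, the lemma shows that $ \mb{k}^* \rightarrow \mb{0} $ far from the data, which is precisely the ingredient needed in the next step to analyze the limiting behavior of the predicted covariance \eqref{eq:KMPcov} and identify its role as an uncertainty estimate.
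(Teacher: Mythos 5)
Your proof is correct and takes essentially the same route as the paper: both reduce the claim to the elementary limit $\sigma_f^2\,\mathrm{exp}(-r^2/l)\rightarrow 0$ as the distance $r\rightarrow+\infty$ (the paper merely parameterizes by the distance $d$ to the nearest training point, whereas you argue for each index $n$ directly). Your explicit mention of the standing hypothesis $l>0$ is a small but welcome addition that the paper leaves implicit.
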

\begin{proof}
	Let us consider $ d = ||\mb{\xi}^{\hat{n}}_\mathcal{I} - \mb{\xi}^*_\mathcal{I} || $, where $ {\hat{n} = \argmin_n ||\mb{\xi}^n_\mathcal{I} - \mb{\xi}^*_\mathcal{I} ||} $ is the index of the training point with the minimum distance to the test point $\mb{\xi}^*_\mathcal{I}$.
	\begin{equation}
		\lim\limits_{d\rightarrow +\infty}k(\mb{\xi}^{\hat{n}}_\mathcal{I},\mb{\xi}^*_\mathcal{I}) = \lim\limits_{d\rightarrow +\infty}\sigma^2_f\mathrm{exp}(-\frac{1}{l} d^2) = 0.
	\end{equation}
\end{proof}

Lemma \ref{lemma1} extends to other popular exponential kernels, including the Mat\'ern kernel \cite{Rasmussen06}.

\begin{theorem}
	Covariance predictions \eqref{eq:KMPcov} converge to a diagonal matrix completely specified by the KMP hyperparameters as test inputs $ \mb{\xi}^*_\mathcal{I} $ move away from the training dataset, i.e. $ d \rightarrow +\infty$. Particularly,
	\begin{equation}
		\lim\limits_{d\rightarrow +\infty}\mb{\Sigma}^*_\mathcal{O} = \sigma_f^2\>\frac{N}{\lambda_2} \mb{I}_{D_\mathcal{O}}.
		\label{eq:uncertainty}
	\end{equation}	
	\vspace{0.05cm}
\end{theorem}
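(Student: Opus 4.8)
The plan is to substitute the definitions of $\mb{k}^{**}$ and $\mb{k}^*$ directly into \eqref{eq:KMPcov} and take the limit term by term. First I would evaluate the self-kernel block: since $\mb{k}^{**} = k(\mb{\xi}^*_\mathcal{I},\mb{\xi}_\mathcal{I}^*)\mb{I}_{D_\mathcal{O}}$ and the squared-exponential kernel \eqref{eq:kernel} evaluated at coincident arguments gives $\sigma^2_f\exp(0)=\sigma^2_f$, this block equals $\sigma^2_f\mb{I}_{D_\mathcal{O}}$ independently of the test point. This already isolates the claimed limit, so it remains only to show that the second term inside \eqref{eq:KMPcov} vanishes.

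Next I would argue that $\mb{k}^*\rightarrow\mb{0}$ as $d\rightarrow+\infty$. Each block of $\mb{k}^*$ is $k(\mb{\xi}^*_\mathcal{I},\mb{\xi}^n_\mathcal{I})\mb{I}_{D_\mathcal{O}}$, and since $\hat{n}=\argmin_n ||\mb{\xi}^n_\mathcal{I}-\mb{\xi}^*_\mathcal{I}||$ is by definition the nearest training point, every distance satisfies $||\mb{\xi}^n_\mathcal{I}-\mb{\xi}^*_\mathcal{I}|| \geq d$. Hence as $d\rightarrow+\infty$ all distances diverge and Lemma \ref{lemma1}, applied to each $n$, forces every block of $\mb{k}^*$ to zero.

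The remaining step is to control the matrix $(\mb{K}+\lambda_2\mb{\Sigma})^{-1}$. The key observation is that this matrix depends only on the training inputs and the reference covariances, not on the test point $\mb{\xi}^*_\mathcal{I}$; it is therefore a fixed, finite matrix throughout the limit (its inverse is presupposed to exist in \eqref{eq:KMPcov}, being the sum of the positive semidefinite Gram matrix $\mb{K}$ and the positive definite $\lambda_2\mb{\Sigma}$). Consequently the quadratic form $\mb{k}^*(\mb{K}+\lambda_2\mb{\Sigma})^{-1}\mb{k}^{*\trsp}$ is the product of a vanishing factor with a bounded matrix, so it tends to the zero matrix. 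Substituting the two limits into \eqref{eq:KMPcov} then yields $\lim_{d\rightarrow+\infty}\mb{\Sigma}^*_\mathcal{O} = \frac{N}{\lambda_2}\sigma^2_f\mb{I}_{D_\mathcal{O}}$, which is \eqref{eq:uncertainty}.

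The only genuine subtlety — and the step I would treat most carefully — is the extension of Lemma \ref{lemma1} from the single nearest point to the entire vector $\mb{k}^*$; the $\argmin$ bound $||\mb{\xi}^n_\mathcal{I}-\mb{\xi}^*_\mathcal{I}|| \geq d$ is exactly what makes the decay uniform across all $n$, so no block can fail to vanish. Everything else reduces to routine substitution and the observation that the regularized inverse is independent of the query.
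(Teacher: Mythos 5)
Your proof is correct and follows essentially the same route as the paper's: apply Lemma \ref{lemma1} to conclude $\mb{k}^*\rightarrow\mb{0}$, observe that $\mb{k}^{**}=\sigma_f^2\mb{I}_{D_\mathcal{O}}$ independently of the query, and substitute into \eqref{eq:KMPcov}. You supply two details the paper leaves implicit --- the uniform vanishing of all blocks of $\mb{k}^*$ via the bound $||\mb{\xi}^n_\mathcal{I}-\mb{\xi}^*_\mathcal{I}||\geq d$, and the boundedness of the query-independent matrix $(\mb{K}+\lambda_2\mb{\Sigma})^{-1}$ --- both of which strengthen rather than change the argument.
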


\begin{proof}
	Following from Lemma \ref{lemma1} and knowing that $ \mb{k}^* = \left[ \mb{k}(\mb{\xi}^*_\mathcal{I},\mb{\xi}_\mathcal{I}^1) \> \ldots \> \mb{k}(\mb{\xi}^*_\mathcal{I},\mb{\xi}_\mathcal{I}^N) \right]$ we have  ${ \lim\limits_{d\rightarrow +\infty}\mb{k}^*=\mb{0}_{D_\mathcal{O}\times ND_\mathcal{O}} }$. Hence
	\begin{equation}
		\lim\limits_{d\rightarrow +\infty}\mb{\Sigma}^*_\mathcal{O} = \lim\limits_{d\rightarrow+\infty}\frac{N}{\lambda_2} \mb{k}^{**}.
		\label{eq:KMPlimk}
	\end{equation}
Moreover we have
\begin{equation}
	\mb{k}^{**} = \mb{k}\left(\mb{\xi}^*_\mathcal{I},\mb{\xi}^*_\mathcal{I}\right) = \sigma_f^2 \mathrm{exp}\left(-\frac{1}{l}0 \right) = \sigma_f^2\mb{I}_{{D}_\mathcal{O}},
	\nonumber
\end{equation}
which replaced in \eqref{eq:KMPlimk} yields \eqref{eq:uncertainty}.
\end{proof}

Equation \eqref{eq:uncertainty} plays a crucial role in our approach. It provides a principled way to know when the model is being queried in regions where data was not present during training. We leverage this information to 1) make the robot compliant when unsure about its actions and 2) let the robot know when to execute control actions pertaining to different KMPs. Moreover, through the dependence on $ \sigma^2_f$, $ N  $ and $\lambda_2$, one can adjust the expression of uncertainty provided by the model, through the tuning of any of those hyperparameters. For instance, increasing the length of the initialized trajectory distribution $ N $ has the effect of scaling the uncertainty. GPR offers a similar property, where the variance prediction converges to the scalar $ \sigma^2_f $. However this is rather limiting as tuning this hyperparameter can have undesired effects on the mean prediction. In KMP, $ N $ and $ \lambda_2 $ do not affect the mean prediction as they do not parameterize the kernel function. Moreover, \eqref{eq:KMPcov} is typically robust to their choice, providing freedom for tuning while yielding proper predictions  (see \cite{Huang2019d} for details).

%
%
%
%
%
%
%
%
%
%

\subsection{Computing optimal controllers from KMP }
\label{sec:optimalCtr}

We now propose to use $ \mb{\Sigma}^*_\mathcal{O} $ to obtain variable control gains that result in a compliant robot both when the variability and uncertainty are high\footnote{In the context of movement synthesis, new inputs occur at every new time step thus we will replace $ * $ by $ t $  from now on in the notation.}. We follow the concept introduced in \cite{Medina2012} and formulate the problem as a LQR. Let us consider linear systems $\mb{\dot{\zeta}}_t = \mb{A}\mb{\zeta}_t + \mb{B}\mb{u}_t$, where ${ \mb{\zeta}_t, \mb{\dot{\zeta}}_t \in \mathbb{R}^{N_S} }$ denote the system state at time $ t $ and its first-order derivative ($ N_S $ is the dimension of the state) and ${\mb{u}_t\in\mathbb{R}^{N_C}}$ is a control command, where $ N_C $ denotes the number of controlled degrees of freedom. {\color{black}Moreover, $ \mb{A}\in\mathbb{R}^{N_S\times N_S} $ and $\mb{B}\in\mathbb{R}^{N_S\times N_C} $ represent the state and input matrices}. We will stick to task space control and hence make a simplifying assumption, in line with \cite{Calinon14ICRA}, that the end-effector can be modeled as a unitary mass, yielding a double integrator system
\begin{equation}
\mb{A}=\left[\begin{matrix}  \mb{0} & \mb{I} \\ \mb{0} & \mb{0}\end{matrix}\right], \>\>\> \mb{B} = \left[\begin{matrix} \mb{0} \\ \mb{I}\end{matrix}\right],
\end{equation}
where $ \mb{0} $ and $ \mb{I} $ are zero and identity matrices of appropriate dimension. We define the end-effector state at $ t $ as its Cartesian position and velocity $ \mb{x}_t, \dot{\mb{x}}_t $, i.e. $ {\mb{\zeta}_t = [\mb{x}^\trsp_t\>\> \dot{\mb{x}}^\trsp_t]^\trsp}$, and therefore $ \mb{u}_t $ corresponds to acceleration commands.

At every time step $ t $ of a task, a KMP is queried with an input test point $ \mb{\xi}^t_\mathcal{I} $, predicting a mean $ \mb{\mu}^{t}_\mathcal{O} $ and a covariance matrix $ \mb{\Sigma}^{t}_\mathcal{O} $. We define $ \hat{\mb{\zeta}}_t=\mb{\mu}^{t}_\mathcal{O} $, i.e. the desired state for the end-effector is given by the mean prediction of KMP. For time-driven tasks, where $ \mb{\xi}^t_\mathcal{I} = t $, a sequence of reference states $ \mb{\hat{\zeta}}_{t=1,\ldots,T} $ can be easily computed and an optimal control command $ \mb{u}_t $ can be found, minimizing
\begin{equation}
\mb{c}(t) = \sum^{T}_{t=1} (\mb{\hat{\zeta}}_t - \mb{\zeta}_t)^\trsp \mb{Q}_t (\mb{\hat{\zeta}}_t - \mb{\zeta}_t) + \mb{u}^\trsp_t \mb{R}_t \mb{u}_t,
\label{eq:LQRobj}
\end{equation}
where $ \mb{Q}_t $ is a $ N_S\times N_S $ positive semi-definite matrix that determines how much the optimization penalizes deviations from $ \mb{\hat{\zeta}}_t $ and $ \mb{R}_t $ is an $ N_C \times N_C  $ positive-definite matrix that penalizes the magnitude of the control commands. Equation \eqref{eq:LQRobj} is the cost function of the finite horizon LQR and its solution is obtained through backward integration of the Riccati equations (see \cite{Calinon14ICRA}). In non-time-driven tasks, e.g. when $ \mb{\xi}^t_\mathcal{I} $ is the state of a human that collaborates with the robot, it is not straightforward to predict a sequence of desired states. In these cases, we resort to the infinite horizon formulation
\begin{equation}
\mb{c}(t) = \sum^{\infty}_{n=t} (\mb{\hat{\zeta}}_t - \mb{\zeta}_n)^\trsp \mb{Q}_t (\mb{\hat{\zeta}}_t - \mb{\zeta}_n) + \mb{u}^\trsp_n \mb{R}_t \mb{u}_n.
\label{eq:LQRobjInf}
\end{equation}
which is solved iteratively using the algebraic Riccati equation. In both cases \eqref{eq:LQRobj}, \eqref{eq:LQRobjInf}, the solution is given by a linear state feedback control law
\begin{equation}
\mb{u}_t = [\mb{K}_t^\mathcal{P}\>\>\>\mb{K}_t^\mathcal{V}](\mb{\hat{\zeta}}_t - \mb{\zeta}_t)
\label{eq:linsys}
\end{equation}
where $ \mb{K}_t^\mathcal{P}, \mb{K}_t^\mathcal{V} $ are stiffness and damping gain matrices that drive the system to the desired state. 

\begin{algorithm}[bt]
	\small
	\captionsetup{font=small}
	\caption{Uncertainty-aware imitation learning}
	\begin{algorithmic}[1]
		\Statex \textit{\textbf{Initialization}}
		\State Identify number of sub-tasks $ P $
		\State Collect demonstrations $ \{\{\{\mb{\xi}^{t,h,p}_\mathcal{I},\mb{\xi}^{t,h,p}_\mathcal{O}\}^T_{t=1}\}^H_{h=1}\}^P_{p=1}$
		\State Generate trajectory distributions $ \{\{\mb{\xi}_\mathcal{I}^{n,p}, \hat{\mb{\mu}}_{n,p},\hat{\mb{\Sigma}}_{n,p}\}^N_{n=1}\}^P_{p=1} $
		\State Select hyperparameters $ \{\sigma^2_{f,p}, l_p, \lambda_{1,p}, \lambda_{2,p} \}^P_{p=1}$ and $ \mb{R}_p $ 
	\end{algorithmic}
	\begin{algorithmic}[1]
		\Statex \textit{\textbf{Movement synthesis}}
		\State \textit{Input:} Test point $ \mb{\xi}^t_\mathcal{I} $
		\For{$p=1,\dots,P$}
		\State Compute $ \mb{\mu}^{t,p}_\mathcal{O}, \mb{\Sigma}^{t,p}_\mathcal{O},$ per \eqref{eq:KMPmean}, \eqref{eq:KMPcov}
		\State Set $ \hat{\mb{\zeta}}^p_t=\mb{\mu}^{t,p}_\mathcal{O} $ and $ \mb{Q}^{p}_t = (\mb{\Sigma}^{t,p}_\mathcal{O})^{-1} $ 
		\State Find optimal gains $ \mb{K}_{t,p}^\mathcal{P}, \mb{K}_{t,p}^\mathcal{V} $  and compute $ \mb{u}^{p}_t $ per \eqref{eq:u}
		\State Set $ \mb{\Gamma}^{p}_t = (\mb{\Sigma}^{t,p}_\mathcal{O})^{-1} $
		\EndFor
		\State Compute $ \hat{\mb{u}}_t $ from \eqref{eq:optProb}
		\State \textit{Output:} Control command $ \hat{\mb{u}}_t $
	\end{algorithmic}
	\label{alg:uncertainIL}
\end{algorithm}

Finally, we set ${ \mb{Q}_t = (\mb{\Sigma}^{t}_\mathcal{O})^{-1} }$. Unlike in previous  works where a similar choice is made \cite{Calinon14ICRA, Silverio2018a, Medina2012}, in our approach the unique properties of KMP endow the robot with the ability to modulate its behavior in the face of two different conditions as a consequence of this setting. First, when the KMP is queried within the training region, full covariance matrices encoding variability and correlations in the demonstrations are estimated, resulting in control gains that reflect the structure in the data. The robot is hence more precise where variability is low (higher gains) and responds to perturbations according to the observed correlations. Second, as the test input deviates from the training data, the robot becomes increasingly more compliant, as a consequence of \eqref{eq:uncertainty}. Intuitively, this makes sense as the robot should be safe when the uncertainty about is actions increases, which is achieved in our formulation by an automatic decrease of the control gains. Our approach is hence the first to permit the robot to be optimal in the region where demonstrations were provided, and safe where data is absent.
\begin{figure}
	\includegraphics[width=\columnwidth]{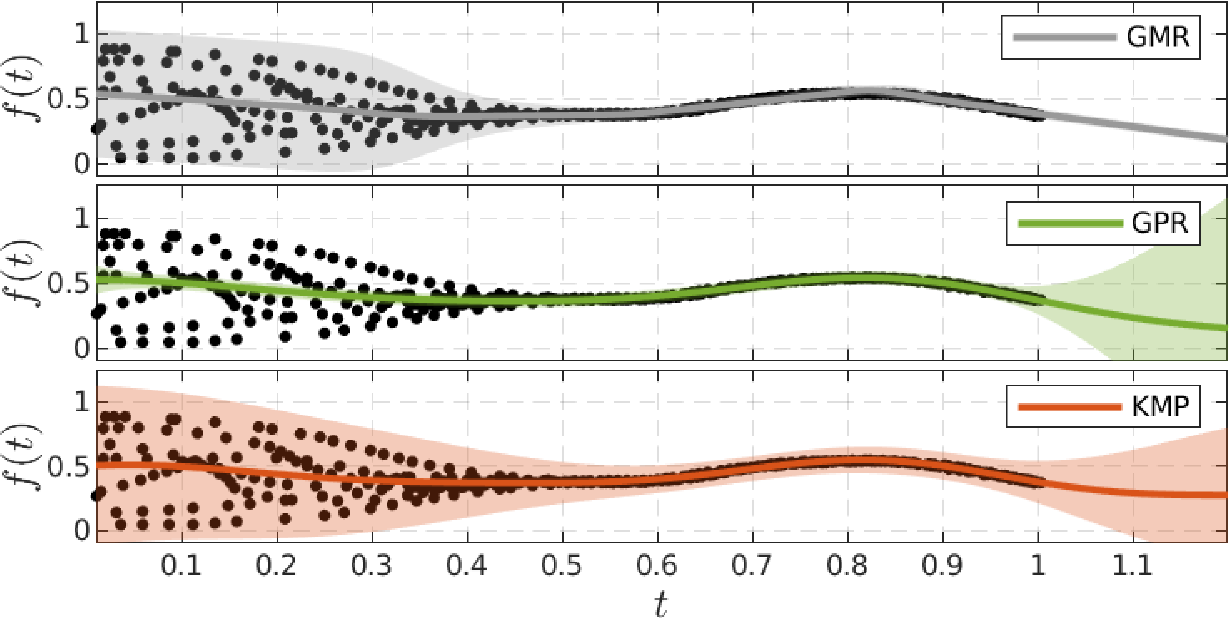}
	\caption{Comparison between KMP, GMR and GPR. Datapoints are plotted in black, solid lines represent the mean and shaded areas correspond to two standard deviations (computed from respective variance estimations).}
	\label{fig:KMPvsAll}	
\end{figure}

\subsection{Fusing optimal controllers}
\label{sec:fusion}
It is often convenient to split the demonstration of complex tasks into smaller, less complex sub-tasks (e.g. grasping a tool and avoiding obstacles). Here we adapt the previously introduced notion of \textit{fusion of controllers} \cite{Silverio2018b} to account for optimal controllers, such as those described in Section \ref{sec:optimalCtr}. Let us consider $ p=1,\ldots,P $ candidate controllers generating commands $ \mb{u}^{p}_t $ that may act on the robot at every time step $ t $ (we omit the subscript $ t $ in the remainder of this section). In a fusion of controllers, an optimal command is computed as
\begin{equation}
\mb{\hat{u}} = \arg\underset{\mb{u}}{\min} \sum_{p=1}^{P}\left(\mb{u} - \mb{u}^{p}\right)^{\trsp} \!\mb{\Gamma}^{p}\! \left(\mb{u} - \mb{u}^{p}\right),
\label{eq:optProb}
\end{equation}
where $ \mb{\Gamma}^{p} $ are weight matrices that regulate the contribution of each individual controller. Examples of $ \mb{\Gamma}^{p} $ found in the literature include scalar terms that maximize external rewards \cite{Dehio2015} and precision matrices, either computed from covariance \cite{Silverio2018b, Calinon09AR} or uncertainty \cite{Silverio2018a}. Equation \eqref{eq:optProb} has an analytical solution given by $ \mb{\hat{u}} = \mb{\hat{\Sigma}}_{u}\sum_{p=1}^{P} \bluetext{\mb{\Gamma}^{p}} \mb{u}^{p}, $
%
%
where $ \mb{\hat{\Sigma}}_{u} = \Big(\sum_{p=1}^{P} \bluetext{\mb{\Gamma}^{p}} \Big)^{-1} $. When $ \mb{u}^p $ and  $ \mb{\Gamma}^p $ are viewed as the mean and precision matrix of a Gaussian distribution, this solution corresponds to a Gaussian product \cite{Silverio2018b}.

\begin{figure*}
	\centering
	\begin{subfigure}[b]{0.7435\textwidth}
		\includegraphics[width=\textwidth]{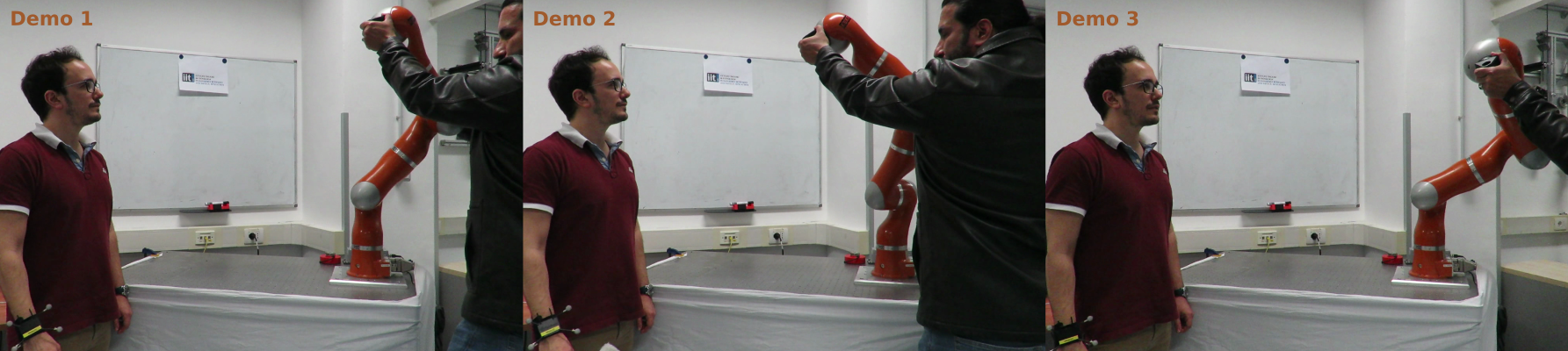}
		\caption{Examples of handover starting positions.}
		\label{fig:demostask1-1}		
	\end{subfigure}
	\begin{subfigure}[b]{0.248\textwidth}
		\includegraphics[width=\textwidth]{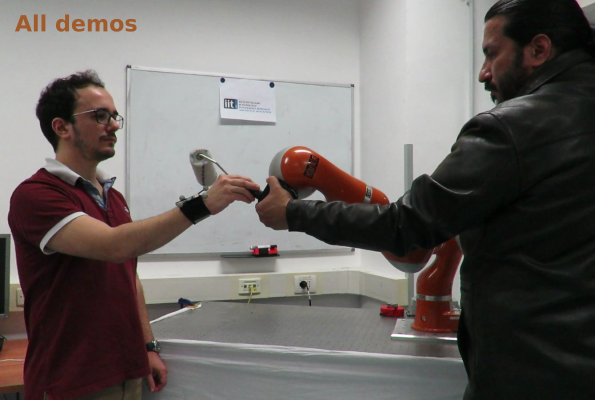}
		\caption{Handover end position.}
		\label{fig:demostask1-2}
	\end{subfigure}
	\caption{Handover demonstrations. The robot, starting from different initial positions, is moved kinesthetically towards a handover location, where the human hands it a paint roller.}
	\label{fig:demostask1-(12)}
\end{figure*}

%
%

%
%
We here propose to use
\begin{equation}
\mb{u}^p = [\mb{K}_{t,p}^\mathcal{P}\>\>\>\mb{K}_{t,p}^\mathcal{V}](\mb{\hat{\zeta}}^p_t - \mb{\zeta}^p_t),
\label{eq:u}
\end{equation}
where $ [\mb{K}_{t,p}^\mathcal{P}\>\>\>\mb{K}_{t,p}^\mathcal{V}] $ are optimal gains estimated using LQR, given the KMP of controller $ p $, and
\begin{equation}
\mb{\Gamma}^p = \left(\mb{\Sigma}^p_\mathcal{O}\right)^{-1}.
\label{eq:Gamma}
\end{equation}
As a consequence of \eqref{eq:Gamma}, controllers with high uncertainty will have negligible influence in the resulting command computed from \eqref{eq:optProb}. This permits the demonstration of a task into sub-tasks, whose activation during reproduction depends on their individual uncertainties.
%
%
%
%
Algorithm \ref{alg:uncertainIL} summarizes the complete approach.

\section{Experimental results}
\label{sec:experiments}

In this section we validate our approach from Section \ref{sec:uncertainApproach} using a toy example with synthetic data (Section \ref{sec:toyexample}) and a robot-assisted painting task (Sections \ref{sec:handover} and \ref{sec:fusiontaskspace}). While we have exploited the latter scenario in previous work \cite{Silverio2018b},  here we expand it by considering optimal controllers. The complete task is divided into two sub-tasks: a handover of a paint roller and the application of painting strokes by the robot on a wooden board. In both sub-tasks, the robot motion is driven by the position of the human hand. A supplementary video showing the obtained results is available at \href{http://joaosilverio.weebly.com/uncert.html}{\small\texttt{http://joaosilverio.weebly.com/uncert.html}}

\subsection{1-D regression example with synthetic data}
\label{sec:toyexample}

We first consider the regression of a scalar function. Using an artificially generated dataset we trained a KMP with $ {K = 4} $ (number of Gaussian components used in the initialization GMM), $ \sigma^2_f = 1.0 $ , $ l = 1\times 10^{-2} $, $ \lambda_1 = 5 $ and $ \lambda_2 = 750 $. We sampled a trajectory distribution to initialize the KMP with $ N=750 $ datapoints. Figure \ref{fig:KMPvsAll} shows the original dataset and the approximated function using KMP, GMR\footnote{computed from the GMM that initialized the KMP} and GPR\footnote{with hyperparameters $ l = 1\times 10^{-2} $, $ \sigma^2_\epsilon = 10^{-2} $}. While the three techniques accurately predict the mean trend in the function, the variance prediction given by KMP unifies the predictions from GMR and GPR, approximating the variability of the former and the uncertainty of the latter in the appropriate regions of the input space.

\subsection{Robot Handover}
\label{sec:handover}

\begin{figure}
	\centering
	\includegraphics[width=0.49\columnwidth]{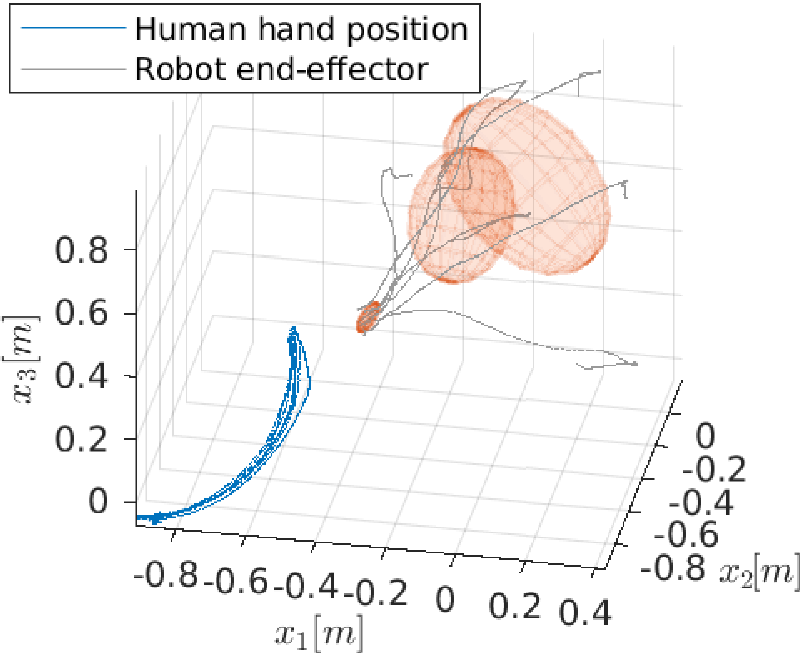}
	\includegraphics[width=0.42\columnwidth]{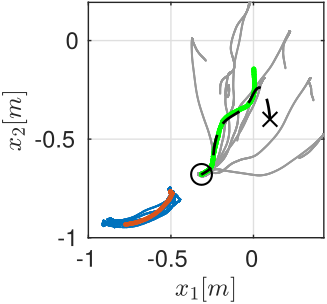}
	\\\vspace{0.3cm}
	\includegraphics[width=0.42\columnwidth]{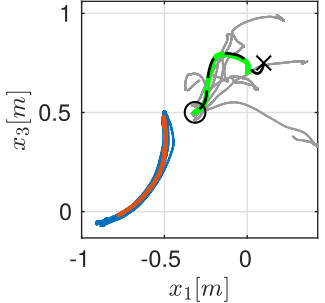}
	\hspace{0.6cm}
	\includegraphics[width=0.42\columnwidth]{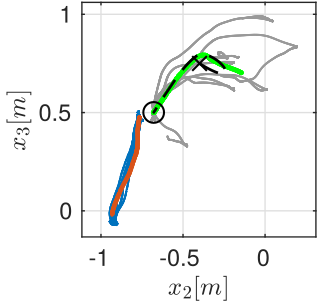}
	\caption{Training data, KMP initialization model and test data. \textbf{Top left:} Demonstrated human hand (blue) and robot end-effector (gray) positions. Red ellipsoids show the 3-component GMM used to initialize KMP. \textbf{Top right and bottom:} Test human hand (orange), KMP generated (green) and robot measured trajectories (black). `$ \times $' and `$ \circ $' mark start and end of trajectory.}
	\label{fig:handover}
\end{figure}

We now show that the proposed approach makes the robot track its reference trajectory using optimal gains near the demonstrations while rendering it compliant when the human is far from the training data. \emph{The handover of the paint roller is achieved by demonstrating to the robot the location of its end-effector as a function of the human hand position.} Note that object handovers are an extensively studied problem in human-robot collaboration and here we simplify the problem to better focus on showcasing our approach. The human hand and robot end-effector positions are here denoted as $ {\mb{x}_H \in \mathbb{R}^3 }$ and $ \mb{x}_R \in \mathbb{R}^3 $ respectively and we wish to learn the mapping $ \mb{x}_H \rightarrow \mb{x}_R$, hence we set $ \mb{\xi}_\mathcal{I} = \mb{x}_H $, $ \mb{\xi}_\mathcal{O} = \mb{x}_R $. We use a KMP with $ K = 3 $, $ \sigma^2_f = 1.0 $, $ l = 0.1 $, $ \lambda_1 = 0.1 $ and $ \lambda_2 = 1 $. Moreover, the KMP is initialized with a trajectory distribution of $ N=500 $ points, obtained using GMR at inputs sampled from the GMM. The cost function of the LQR problem is parameterized with $ \mb{R} = 10^{-2}\mb{I}_{3\times 3} $ and we follow the infinite horizon formulation minimizing \eqref{eq:LQRobjInf}, since the input is the human hand position (i.e. not a time-driven motion).


Figure \ref{fig:handover} shows the training dataset obtained from 7 demonstrations and the resulting GMM used to initialize the KMP (top-left). Moreover it shows the robot end-effector motion computed for a new human hand trajectory used as a test set. As demonstrated, the robot starts at a given position in its task space and moves smoothly towards the handover position, with the learned optimal gains. Figure \ref{fig:Gains} shows the stiffness and damping gains during one execution, plotted as a function of time. The control gains gradually increase as the human hand approaches the robot, ensuring an accurate tracking of the handover position. This goes in opposite direction to the data covariance that starts large and gradually decreases (Fig. \ref{fig:handover} top-left).

\begin{figure}
	\includegraphics[width=\columnwidth]{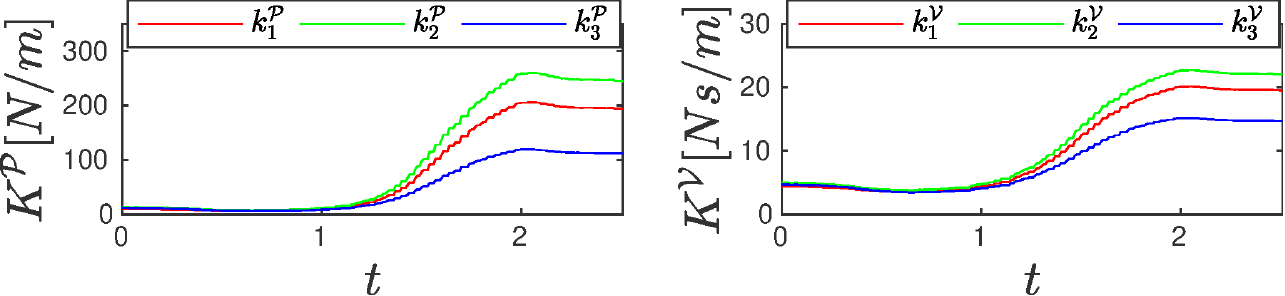}
	\caption{Stiffness and damping gains during one handover. The gains increase towards the end of the task since the end-effector variability decreases as the robot approaches the handover location.}
	\label{fig:Gains}
\end{figure}

\begin{figure}
	\includegraphics[width=\columnwidth]{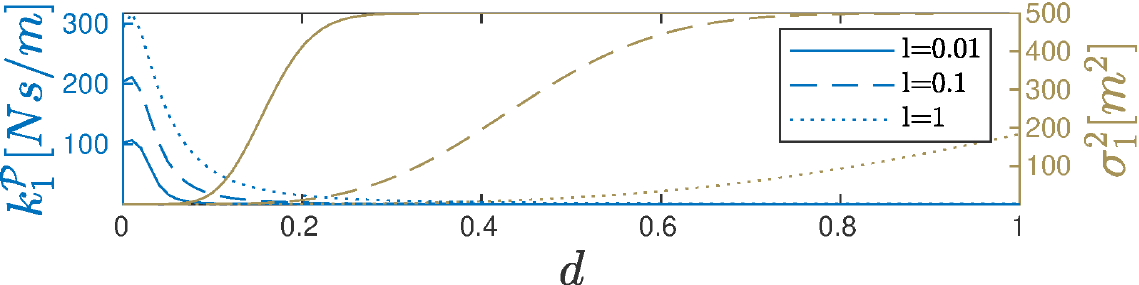}
	\caption{Stiffness gains (blue) and variance (light brown), plotted for the first task space dimension, as a function of the distance to demonstrations $ d $ and different  $ l $. Control gains decrease as the distance increases, making the robot gradually more compliant, hence safer, when it does not know what to do.}
	\label{fig:Lengthscales}
\end{figure}

Figure \ref{fig:Lengthscales} shows the estimated gains (left axis) as one moves away from the region where demonstrations were provided. We manually selected one point in the test set and queried the model at several points up to $ 1m $ away from it along the $ +x_1 $ direction. In order to facilitate the visualization, we plot one single output dimension and omit the damping gains. Notice the increase in the predicted variance (right axis) as one moves away from the demonstrations, which leads to decreasing control gains. This proves experimentally our proposition in Section \ref{sec:uncertKMP}. Moreover, notice the influence of the kernel length scale on how quickly control gains approach $ 0 $. Increasing $ l $ has the effect of decreasing the distance between points, hence higher values result in a slower increase of uncertainty as one moves away from the data. The squared-exponential nature of the kernel therefore permits regulating the rate at which the robot becomes compliant through the tunning of $ l $. The enclosed video further elucidates the compliance aspect of our approach.

\subsection{Fusion of task space controllers}
\label{sec:fusiontaskspace}

In addition to the handover of the paint roller, we also teach the robot how to paint. The goal of this experiment is to show that accessing uncertainty, in addition to covariance, permits the fusion of control commands \emph{in a way that different sub-tasks are activated, depending on the state of the human (here defined by its right hand position)}. In this case, the complete painting task was demonstrated partially into two sub-tasks, whose activation will be inferred from the corresponding models.

\begin{figure}
	\includegraphics[width=\columnwidth]{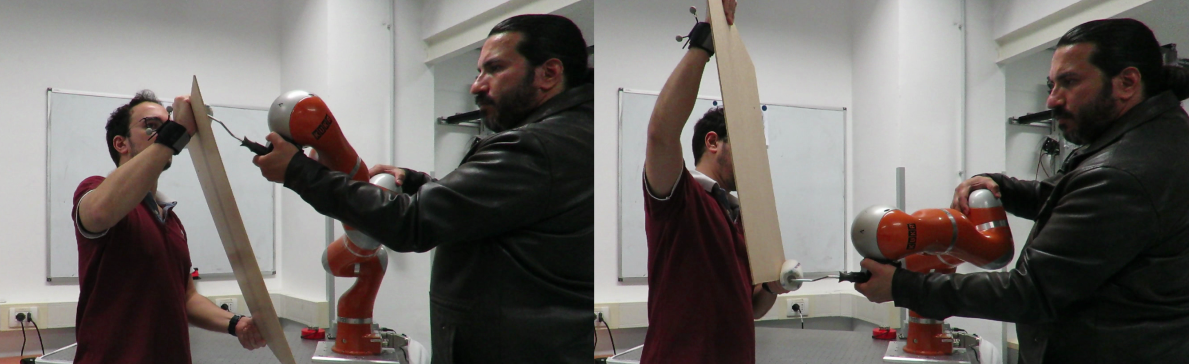}
	\caption{Human operators teach the robot how to apply painting strokes.}
	\label{fig:demostask1-3}
\end{figure}

\begin{figure}
	\includegraphics[width=\columnwidth]{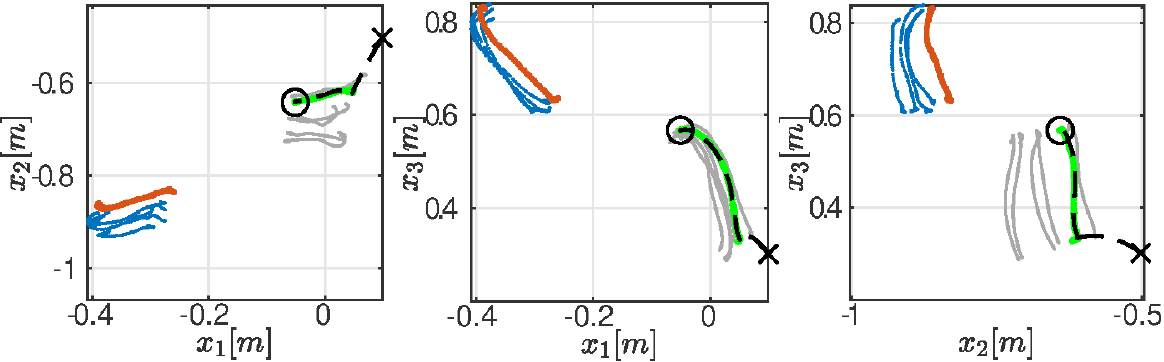}
	\caption{Painting demonstration dataset and reproduction. Training human data (blue), robot data (gray), test human data (orange) and robot desired and observed trajectories (green and black, respectively).}
	\label{fig:paintingdata}
\end{figure}

\begin{figure}
	\includegraphics[width=\columnwidth]{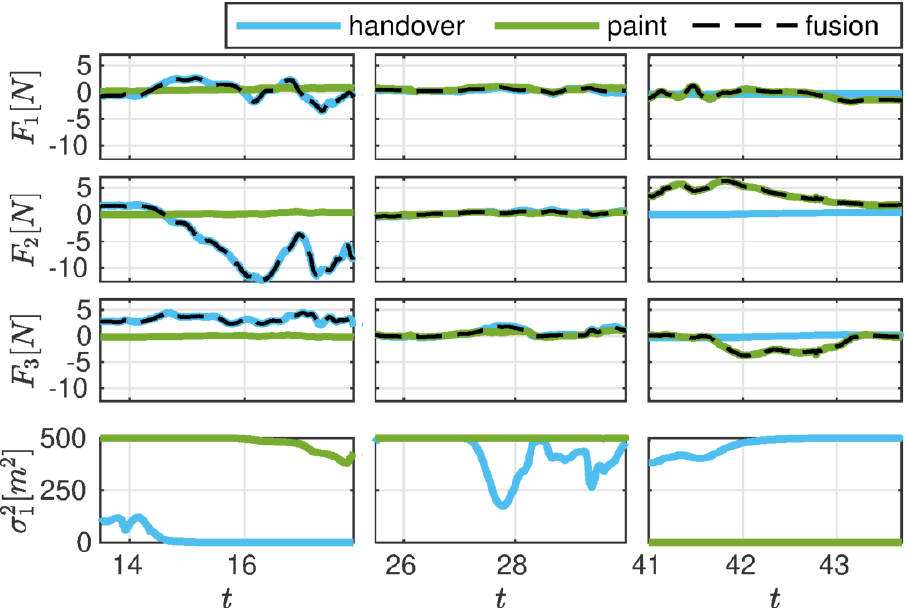}
	\caption{\textbf{Rows 1-3:} Forces generated by each KMP (blue and green) and force used by the robot (black) at three different time intervals of the complete painting task. \textbf{Bottom row:} First entry of the covariance matrix \eqref{eq:KMPcov} predicted by each KMP.}
	\label{fig:forces}
\end{figure}

\begin{figure*}
	\centering
	\begin{subfigure}[b]{0.485\textwidth}
		\includegraphics[width=0.9\textwidth]{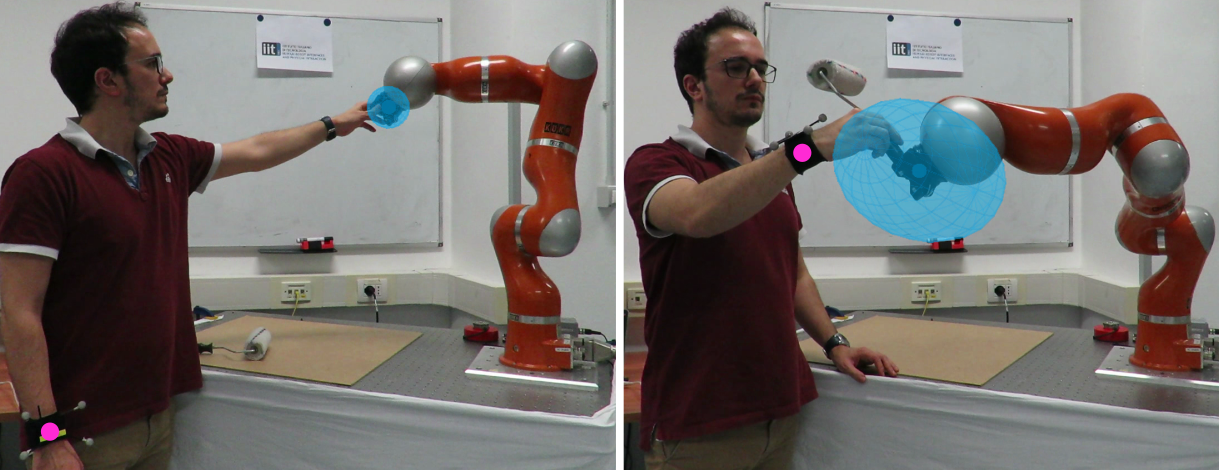}
		\caption{Paint roller handover.}
		\label{fig:repro_handover}		
	\end{subfigure}
	\begin{subfigure}[b]{0.237\textwidth}
		\includegraphics[width=0.9\textwidth]{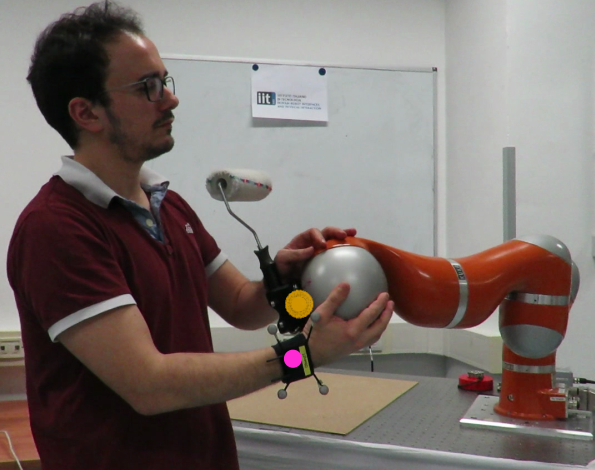}
		\caption{Compliant robot.}
		\label{fig:repro_arbitrary}
	\end{subfigure}
	\begin{subfigure}[b]{0.237\textwidth}
		\includegraphics[width=0.9\textwidth]{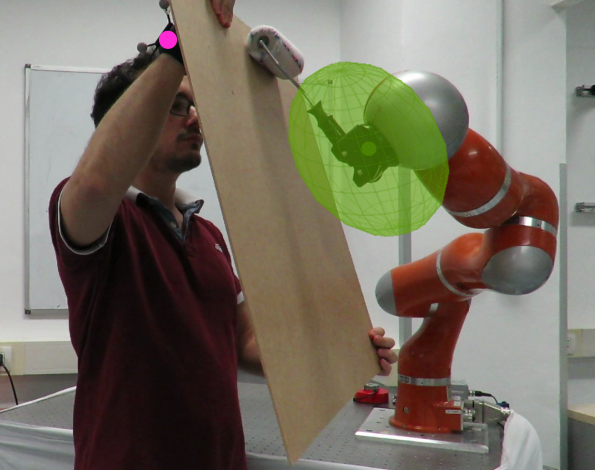}
		\caption{Painting strokes.}
		\label{fig:repro_paint}
	\end{subfigure}
	\caption{\textbf{Fusion of optimal controllers:} snapshots of the complete painting task. The end-effector stiffness is depicted as an ellipsoid at different moments of the task (larger ellipsoids correspond to higher stiffnesses). The position of the human hand (pink) is used to query two KMPs, whose predictions generated both a reference end-effector position and a covariance matrix from which optimal stiffnesses were computed. }
	\label{fig:repros}
\end{figure*}

We provided 5 demonstrations of painting strokes to the robot as shown in Fig. \ref{fig:demostask1-3}. During these demonstrations, the robot learns to map the wooden board motion (as defined by the human hand) to the movements it should perform with the end-effector. We used the same KMP and LQR parameters as in Section \ref{sec:handover}. Figure \ref{fig:paintingdata} shows the data used to train the model, together with one reproduction for a human hand trajectory in the neighborhood of the demonstrations. Note that in this case, the differences in covariance in different parts of the end-effector trajectory are not as accentuated as in the handover task. Nonetheless, optimal gains are computed at every moment according to the observed variability and correlations.

Using two KMPs, each one responsible for a sub-task, we reproduced the complete task where the control commands generated by the two candidate controllers were fused as described in \ref{sec:fusion}. The complete task took about 2 minutes, but here we report about $ 45s $ (the reader is referred to the supplementary video for the full experiment). The first three rows in Fig. \ref{fig:forces} show the forces generated by each candidate controller and the force used by the robot. Due to the unit mass assumption in Section \ref{sec:optimalCtr}, the acceleration commands are equivalent to desired task space forces $ {\mb{F}\in\mathbb{R}^{N_C}} $ which are converted into joint space torques through $ {\mb{\tau} = \mb{J}^\trsp\!\mb{F}}$ \cite{Khatib87} ($ \mb{\tau} $ is a vector of torques and $ \mb{J} $ is the end-effector Jacobian). The bottom row shows the first element of the covariance matrix estimated from \eqref{eq:KMPcov} (the remainder diagonal elements exhibit similar temporal profiles, hence were omitted). The first column ($\approx 14s-18s  $) corresponds to the beginning of the task, where the paint roller is handed over. The force used by the robot closely matches the one generated by the handover KMP, as its predicted variance is significantly lower during the whole task (bottom plot). Notice the increased value of $ \sigma^2_1 $ around $ 14s $ -- it reflects the high variability in the handover demonstrations at the beginning of the task. This value is consistently below the one generated by the painting KMP. If this were not the case, $ N $ could be increased. In the second column of Fig. \ref{fig:forces} ($\approx 26s-30s  $) we can see that both KMP generate high variance. This corresponds to a region in between the two sub-tasks hence none of the two should be activated. The fact that the blue line does not reach $ 500 $ is due to the human moving slightly closer to the region where the handover was demonstrated. Note however that the observed values are consistently higher than those when tasks are activated. Moreover, notice the low forces during this interval -- high covariances yielded minimal control gains resulting in low forces and a compliant robot. Finally in the third column ($\approx 41s-44s  $) we see the task space forces generated during one painting stroke. Notice how, this time, the result from the controller fusion matches the force given by the painting KMP, since the variance for this sub-task is consistently very low (see bottom plot).

Finally, Fig. \ref{fig:repros} shows snapshots of different moments of the reproduction. We draw ellipsoids representing full stiffness matrices at the end-effector in the different task moments. These matrices were estimated from the covariance predictions \eqref{eq:KMPcov} using LQR (Section \ref{sec:optimalCtr}). Figure \ref{fig:repro_handover} shows the two distinct moments of the handover: the beginning, where the end-effector stiffness is low and the user can move the robot around easily, and the end, where the stiffness is high, allowing for the insertion of the paint roller. For an easier visualization we only plotted the stiffness generated by the \textit{handover KMP} (hence the blue color) since the one from the \textit{painting KMP} was negligible in this part of the task (as we saw in Fig. \ref{fig:forces}). Figure \ref{fig:repro_arbitrary} shows a part of the task where none of the two sub-tasks is active. This results in an extremely low stiffness matrix and a fully compliant robot that is safe for the human to interact with and move around in the workspace. In Fig. \ref{fig:repro_paint} the robot performs a painting stroke on the board, driven by the human hand position, with high stiffness since the demonstrations were consistent in this region. The drawn stiffness ellipsoid resulted from the \textit{painting KMP}, since the one from the \textit{handover KMP} was negligible in the vicinity of this sub-task.

\section{Discussion}
\label{sec:discussion}

In the previous section we showed how KMP can be used to estimate full covariance matrices and uncertainty in order to learn optimal and safe controllers, as well as tasks which are comprised of more than one sub-task. One relevant point of discussion is the fact that, unlike \cite{Choi2018}, our approach does not explicitly separate between covariance and uncertainty predictions -- they are both the result of \eqref{eq:KMPcov}. However, we know a priori the form of the uncertainty predictions, as it is defined by the KMP hyperparameters. If desired, one could potentially assign confidence to a prediction as to whether it corresponds to a covariance matrix or an uncertainty matrix. One way to achieve this could be by resorting to heuristics (e.g. the determinant of the prediction, the Frobenius norm of the distance between matrices) to disambiguate between the two possibilities. Alternatively, one could also exploit the freedom given by the hyperparameters in \eqref{eq:uncertainty} to accentuate the difference between the two types of prediction. For instance, by setting very low values for $ \lambda_2 $ (unconstraining covariances), one can increase the uncertainty by several orders of magnitude. The same effect would be achieved by sampling more points into the KMP reference trajectory, thus increasing $ N $. This is helped by the fact that, in practice, there are physical limits to how big variability in the data can be (e.g. joint limits, robot workspace size), hence the uncertainty matrix can often be designed so that it is significantly greater than these.

Finally, in the considered setup the two tasks were performed in different parts of the workspace. This was a design choice, as the 3D human hand position was being used to drive the KMP of each sub-task. However, in practice, our approach can extend to more complicated scenarios. For example, if one was to augment the input vector $\mb{\xi}_\mathcal{I} $ to include other features (e.g. human upper-body configuration, eye gaze), several tasks could potentially overlap in the robot workspace, since there would be more features accounted for by the inputs. This would also lead to a decreased possibility of simultaneously activating undesired sub-tasks.

\section{Conclusions and Future Work}
\label{sec:conclusions}

We proposed an imitation learning approach that takes into account the robot uncertainty about its actions, in addition to the variability and correlations in the data, to estimate optimal controllers from demonstrations. The approach was shown to allow for increased safety, as the robot is compliant when uncertain, and efficiency, with the robot using optimal control gains where demonstrations were given. We also showed that demonstrated tasks can be split into sub-tasks that are activated based on their individual uncertainty levels.

In future work we will study how different kernels can be exploited in our framework. Periodic and composite kernels may allow, respectively, for learning tasks that require rhythmic motions or for the usage of different kernels in different parts of a task. Moreover, we plan to exploit the compliance of the robot when uncertain to provide new demonstrations (of the same or new sub-tasks), in an online learning setting. Finally, it should be noted that the prediction capabilities of KMP are independent of the optimal control framework. Predicting full covariance matrices and uncertainty (as shown here), handling start-/via-/end-points \cite{Huang2019d}, multi-dimensional inputs and orientations \cite{Huang2019} are distinguishable features of KMP that we aim to leverage in other robotics applications.

\bibliographystyle{IEEEtran} 
\bibliography{Silverio18.bib}

\end{document}